\newcommand{\isdraft}{\boolean{true}} 
\renewcommand{\isdraft}{\boolean{false}} 
\ifthenelse{\isdraft}{
    \usepackage{showkeys}
    \usepackage{xcolor}
    \usepackage[colorlinks=true,linkcolor=blue]{hyperref}
}{}
\newcommand{\markupdraft}[2]{
    \ifthenelse{\equal{#1}{display}}{#2}{}
    \ifthenelse{\equal{#1}{color}}{\color{#2}}{}
}
\newcommand{\notecolored}[3][]{\markupdraft{display}{{\color{#2}\noindent[Note (#1): #3]}}}
\newcommand{\newcolored}[3][]{{\markupdraft{color}{#2}#3}
    \ifthenelse{\equal{#1}{}}{}{\markupdraft{display}{{\color{yellow!70!black}[#1]}}}} 
\providecommand{\del}[2][]{{\markupdraft{display}{{\color{red!20!yellow}[rmed: "#2"[#1]]}}}} 
\providecommand{\new}[2][]{\newcolored[#1]{blue}{#2}}
\providecommand{\nnew}[2][]{\newcolored[#1]{red}{#2}}
\providecommand{\rem}[2][]{\notecolored[#1]{green}{#2}}    
\ifthenelse{\isdraft}{}{\renewcommand{\markupdraft}[2]{}}
\newcommand{\niko}[1]{\rem[Niko]{\color{brown}#1}}
\newcommand{\yohe}[1]{\rem[Youhei]{#1}}
\newcommand{\anne}[1]{\rem[Anne]{\color{orange}#1}}
\DeclareMathOperator{\Tr}{Tr}
\DeclareMathOperator{\vect}{vec} 
\providecommand{\abs}[1]{\left\lvert#1\right\rvert} 
\providecommand{\norm}[1]{\left\lVert#1\right\rVert} 
\providecommand{\tp}{\mathrm{T}} 
\providecommand{\dx}{\mathrm{d}x} 
\providecommand{\dz}{\mathrm{d}z}
\providecommand{\rmd}{\mathrm{d}}
\renewcommand{\geq}{\geqslant}
\renewcommand{\leq}{\leqslant}
\renewcommand{\epsilon}{\varepsilon}
\renewcommand{\phi}{\varphi}
\providecommand{\btheta}{\bar \theta} 
\providecommand{\stheta}{\theta^{*}} 
\providecommand{\thetad}{\theta'} 
\providecommand{\PP}[1]{P_{#1}} 
\providecommand{\pd}[1]{p_{#1}} 
\providecommand{\PPZ}{\PP{d}} 
\providecommand{\Gauss}{\mathcal{N}} 
\providecommand{\mm}{m} 
\providecommand{\vv}{v} 
\providecommand{\CC}{C} 
\providecommand{\FF}{J} 
\providecommand{\Ft}{\FF_{\theta}} 
\providecommand{\ww}{w} 
\providecommand{\ddt}[1]{\frac{\mathrm{d}#1}{\mathrm{dt}}}
\providecommand{\VV}{V} 
\providecommand{\xstar}{\ensuremath{\mathrm{x^{*}}}}
\providecommand{\xbar}{\ensuremath{\mathrm{\bar{x}}}}
\providecommand{\E}{\mathbb{E}}
\providecommand{\R}{\mathbb{R}}
\providecommand{\FM}{\mathcal{I}}
\providecommand{\FIM}{\FM_{\theta}}
\providecommand{\eqesigo}{\eqref{eq:esigo} }
\providecommand{\gvv}{F_{\vv}}
\providecommand{\gmm}{F_{\mm}}
\providecommand{\gtt}{F_{\theta}}
\providecommand{\Wf}{W_{\theta}^{f}}
\providecommand{\dd}{{d_\theta}} 
\providecommand{\tVV}{\tilde \VV}
\providecommand{\gttz}{F_\theta(\theta, z)}
\providecommand{\flow}{\phi}
\providecommand{\asm}[1]{\textnormal{(#1)}}
\providecommand{\vf}{F}
\providecommand{\sif}{\mathcal{G}}
\providecommand{\Ctwo}{\mathcal{C}^{2}}
\providecommand{\leb}{\mu_\mathrm{Leb}}
\begin{document}

\title{Convergence of the Continuous Time Trajectories of Isotropic Evolution Strategies on Monotonic $\Ctwo$-composite Functions
}
\author{Youhei Akimoto \and Anne Auger \and Nikolaus Hansen}
\institute{TAO Team, INRIA Saclay-Ile-de-France, LRI, Paris Sud University, France\\
\email{\{Youhei.Akimoto, Anne.Auger, Nikolaus.Hansen\}@lri.fr}
}

\maketitle              

\begin{abstract}
The \textit{Information-Geometric Optimization (IGO)} has been introduced as a unified framework for stochastic search algorithms. Given a parametrized family of probability distributions on the search space, the IGO turns an arbitrary optimization problem on the search space into an optimization problem on the parameter space of the probability distribution family and defines a natural gradient ascent on this space. From the natural gradients defined over the entire parameter space we obtain continuous time trajectories which are the solutions of an ordinary differential equation (ODE).
Via discretization, the IGO naturally defines an iterated gradient ascent algorithm. Depending on the chosen distribution family, the IGO recovers several known algorithms such as the pure rank-$\mu$ update CMA-ES. Consequently, the continuous time IGO-trajectory can be viewed as an idealization of the original algorithm. 

In this paper we study the continuous time trajectories of the IGO given the family of isotropic Gaussian distributions. 
These trajectories are a deterministic continuous time model of the underlying evolution strategy in the limit for population size to infinity and change rates to zero.
On functions that are the composite of a monotone and a convex-quadratic function, we prove the global convergence of the solution of the ODE towards the global optimum. We extend this result to composites of monotone and twice continuously differentiable functions and prove local convergence towards local optima.
\end{abstract}



%
%
%

\section{Introduction}

Evolution Strategies (ESs) are stochastic search algorithms for numerical optimization. In ESs, candidate solutions are sampled using a Gaussian distribution parametrized by a mean vector and a covariance matrix. In state-of-the art ESs, those parameters are iteratively adapted using the ranking of the candidate solutions w.r.t. the objective function. Consequently, ESs are invariant to applying a monotonic transformation to the objective function. Adaptive ES algorithms are successfully applied in practice and there is ample empirical evidence that they converge linearly towards a local optimum of the objective function on a wide class of functions. However, their theoretical analysis even on simple functions is difficult as the state of the algorithm is given by both the mean vector and the covariance matrix that have a stochastic dynamic that needs to be simultaneously controlled. Their linear convergence to local optima is so far only proven for functions that are composite of a monotonic transformation with a convex quadratic function---hence function with a single optimum---for rather simple search algorithms compared to the covariance matrix adaptation evolution strategy (CMA-ES) that is considered as the state-of-the-art ES \cite{Auger2005tcs,Jagerskupper:2006ur,Jagerskupper:2006cf,Jagerskupper2007tcs}. In this paper, instead of analyzing the exact stochastic dynamic of the algorithms, we consider the deterministic time continuous model underlying adaptive ESs that follows from the Information-Geometric Optimization (IGO) setting recently introduced \cite{Arnold2011arxiv}.

The Information-Geometric Optimization is a unified framework for randomized search algorithms. Given a family of probability distributions parametrized by $\theta \in \Theta$, the original objective function, $f$, is transformed to a fitness function $\Ft$ defined on $\Theta$. The IGO algorithm defined on $\Theta$ performs a natural gradient ascent aiming at maximizing $\Ft$. For the family of Gaussian distributions, the IGO algorithm recovers the pure rank-$\mu$ update CMA-ES \cite{Hansen2003ec}, for the family of Bernoulli distributions, PBIL \cite{Chandy:1995ux} is recovered. When the step-size for the gradient ascent algorithm (that corresponds to a learning rate in CMA-ES and PBIL) goes to zero, we obtain an ordinary differential equation (ODE) in $\theta$. The set of solutions of this ODE, the IGO-flow, consists of continuous time models of the recovered algorithms in the limit of the population size going to infinity and the step-size (learning rate for ES or PBIL) to zero.

In this paper we analyze the convergence of the IGO-flow for isotropic ESs where the family of distributions is Gaussian with covariance matrix equal to an overall variance times the identity. The underlying algorithms are step-size adaptive ESs that resemble ESs with derandomized adaptation \cite{ostermeier1994derandomized} and encompass xNES \cite{Glasmachers2010gecco} and the pure rank-$\mu$ update CMA-ES with only one variance parameter \cite{Hansen2003ec}. Previous works have proposed and analyzed continuous models of ESs that are solutions of ODEs \cite{Yin:1995wj,Yin1995ec} using the machinery of stochastic approximation \cite{KushnerBOOK2003,Borkar:2008ts}. The ODE variable in these studies encodes solely the mean vector of the search distribution and the overall variance is taken to be proportional to $H(\nabla f)$ where $H$ is a smooth function with $H(0)=0$. Consequently the model analyzed looses invariance to monotonic transformation of the objective function and scale-invariance, both being fundamental properties of virtually all ESs. The technique relies on the Lyapunov function approach and assumes the stability of critical points of the ODE \cite{Yin:1995wj,Yin1995ec}. 
In this paper, our approach also relies on the stability of the critical points of the ODE that we analyze  by means of Lyapunov functions. However one difficulty stems from the fact that when convergence occurs, the variance typically converges to zero which is at the boundary of the definition domain $\Theta$. To circumvent this difficulty we extend the standard Lyapunov method to be able to study stability of boundary points.

Applying the extended Lyapunov's method to the IGO-flow in the manifold of isotropic Gaussian distributions, we derive a sufficient condition on the so-called weight function $\ww$---parameter of the algorithm and usually chosen by the algorithm designer---so that the IGO-flow converges to the global minimum independently of the starting point on objective functions that are composite of a monotonic function with a convex quadratic function. We will call those functions \emph{monotonic convex-quadratic-composite} in the sequel. We then extend this result to functions that are the composition of a monotonic transformation and a twice continuously differentiable function, called monotonic $\Ctwo$-composite in the rest of the paper. We prove local convergence to a local optimum of the function in the sense that starting close enough from a local optimum, with a small enough variance, the IGO-flow converges to this local optimum.

The rest of the paper is organized as follows. In Section~\ref{sec:esigo} we introduce the IGO-flow for the family of isotropic Gaussian distributions, which we call \textit{ES-IGO}-flow. In Section~\ref{sec:elst} we extend the standard Lyapunov's method for proving stability. In Section~\ref{sec:conv} we apply the extended method to the ES-IGO-flow and provide convergence results of the ES-IGO-flow on monotonic convex-quadratic-composite functions and on monotonic $\Ctwo$-composite functions.

\paragraph*{\bf Notation.} For $A \subset X$, where $X$ is a topological space, we let $A^{c}$ denote the complement of $A$ in $X$, $A^{o}$ the interior of $A$, $\overline A$ the closure of $A$, $\partial A = \overline A \setminus A^{o}$ the boundary of $A$. Let $\R$ and $\R^{d}$ be the sets of real numbers and $d$-dimensional real vectors, $\R_{\geq 0}$ and $\R_{+}$ denote the sets of non-negative and positive real numbers, respectively. Let $\norm{x}$ represent the Euclidean norm of $x \in \R^{d}$. The open and closed balls in $\R^{d}$ centered at $\theta$ with radius $r > 0$ are denoted by $B(\theta, r)$ and $\overline B(\theta, r)$.

Let $\leb$ denote the Lebesgue measure on either $\R$ or $\R^{d}$. Let $\PP{1}$ and $\PP{d}$ be the probability measures induced by the one-variate and $d$-variate standard normal distributions, $\pd{1}$ and $\pd{d}$ the probability density function induced by $\PP{1}$ and $\PP{d}$ w.r.t.\ $\leb$. 
Let $\pd{\theta}$ and $\PP{\theta}$ represent the probability density function w.r.t.\ $\leb$ and the probability measure induced by the Gaussian distribution $\Gauss(\mm(\theta), \CC(\theta))$ parameterized by $\theta \in \Theta$, where the mean vector $\mm(\theta)$ is in $\R^{d}$ and the covariance matrix $\CC(\theta)$ is a positive definite symmetric matrix of dimension $d$. We sometimes abbreviate $\mm(\theta(t))$ and $\CC(\theta(t))$ to $\mm(t)$ and $\CC(t)$. Let $\vect: \R^{d \times d} \to \R^{d^{2}}$ denote the vectorization operator such that $\vect: \CC \mapsto [\CC_{1,1}, \CC_{1,2},\dots, \CC_{1,d},\CC_{2,1},\dots, \CC_{d,d}]^\tp$, where $\CC_{i,j}$ is the $i,j$-th element of $\CC$. We use both notations: $\theta = [\mm^\tp, \vect(\CC)^\tp]^\tp$ and $\theta = (\mm, \CC)$.

\section{The ES-IGO-flow}\label{sec:esigo}

The IGO framework for continuous optimization with the family of Gaussian distributions is as follows. The original objective is to minimize an objective function $f: \R^{d} \to \R$. This objective function is mapped into a function on $\Theta$. Hereunder, we suppose that $f$ is $\leb$-measurable. Let $\ww: [0, 1] \to \R$ be a bounded, non-increasing weight function. We define the weighted quantile function \cite{Arnold2011arxiv} as
\begin{equation}\label{eq:w}
\Wf(x) = \ww\bigl( \PP{\theta}[y: f(y) \leq f(x)] \bigr)\enspace. 
\end{equation}
The function $\Wf(x)$ is a preference weight for $x$ according to the $\PP{\theta}$-quantile. The fitness value of $\thetad$ given $\theta$ is defined as the expectation of the preference $\Wf$ over $\PP{\thetad}$, 
$\Ft(\thetad) = \E_{x \sim P_\theta}\bigl[\Wf(x) \bigr]$. 
Note that since $\Wf(x)$ depends on $\theta$ so does $\Ft(\thetad)$. The function $\Ft$ is defined on a statistical manifold $(\Theta, \FM)$ equipped with the Fisher metric $\FM$ as a Riemannian metric. The Fisher metric is the natural metric. It is compatible with relative entropy and with KL-divergence and is the only metric that does not depend on the chosen parametrization. Using log-likelihood trick and exchanging the order of differentiation and integration, the ``vanilla'' gradient of $\Ft$ at $\thetad = \theta$ can be expressed as 
$\nabla_{\thetad} \Ft(\thetad) \rvert_{\thetad = \theta} = \E_{x \sim P_\theta}\bigl[ \Wf(x) \nabla_{\theta} \ln(\pd{\theta}(x)) \bigr]$. 
The natural gradient, that is, the gradient taken w.r.t.\ the Fisher metric, is given by the product of the inverse of the Fisher information matrix $\FIM$ at $\theta$ and the vanilla gradient, namely $\FIM^{-1} \nabla_{\thetad} \Ft(\thetad) \rvert_{\thetad = \theta}$. 
The IGO ordinary differential equation is defined as
\begin{equation}
\ddt{\theta} = \FIM^{-1} \nabla_{\thetad} \Ft(\thetad) \bigr\rvert_{\thetad = \theta}\enspace. \label{eq:igo}
\end{equation}
Since the right-hand side (RHS) of the above ODE is independent of $t$ the IGO ODE is autonomous. The IGO-flow is the set of solution trajectories of the above ODE \eqref{eq:igo}.

When the parameter $\theta$ encodes the mean vector and the covariance matrix of the gaussian distribution in the following way $\theta = [\mm^\tp, \vect(\CC)^\tp]^\tp$, the product of the inverse of the Fisher information matrix $\FIM^{-1}$ and the gradient of the log-likelihood $\nabla_{\theta} \ln (\pd{\theta}(x))$ can be written in an explicit form \cite{Akimoto-alg} and \eqref{eq:igo} reduces to
\begin{equation}
\ddt{\theta} = \int \Wf(x)
\begin{bmatrix}
x - \mm\\
\vect\bigl((x - \mm)(x - \mm)^\tp - \CC\bigr)
\end{bmatrix}
\PP{\theta}(\dx)\enspace.\label{eq:cma-igo}
\end{equation}
The pure rank-$\mu$ update CMA-ES \cite{Hansen2003ec} can be considered as an Euler scheme for solving \eqref{eq:cma-igo} with a Monte-Carlo approximation of the integral. Let $x_{1}, \dots, x_{n}$ be samples independently generated from $\PP{\theta}$. Then, the quantile $\PP{\theta}[y: f(y) \leq f(x_{i})]$ in \eqref{eq:w} is approximated by the number of solutions better than $x_{i}$ divided by $n$, i.e., $\bigl\lvert \{x_{j}, j = 1, \dots, n: f(x_{j}) \leq f(x_{i})\} \bigr\rvert / n =: R_{i} / n $. Then $\Wf(x_{i})$ is approximated by \nnew{$\ww\bigl( (R_{i} - 1/2) / n \bigr)$}\del{$\ww(R_{i} / n)$}\niko{wouldn't $\ww(R_i-1/2)/n$ be better?}\yohe{to make it consistent with IGO paper I followed Niko's suggestion}, where $\ww$ is the given weight function. The Euler scheme for approximating the solutions of \eqref{eq:cma-igo} where the integral is approximated by Monte-Carlo leads to
\begin{equation}
\theta^{t+1} = \theta^{t} + \eta \sum_{i=1}^{n} \nnew{\frac{\ww\bigl( (R_{i} - 1/2) / n \bigr)}{n}}\del{\frac{\ww(R_{i} / n)}{n}}
\begin{bmatrix}
x_{i} - \mm^{t}\\
\vect\bigl((x_{i} - \mm^{t})(x_{i} - \mm^{t})^\tp - \CC^{t} \bigr)
\end{bmatrix} \enspace,
\label{eq:cma-igo-alg}
\end{equation}
where $\eta$ is the time discretization step-size. This equation is equivalent to the pure rank-$\mu$ update CMA-ES when the learning rates $\eta_\mm$ \new{and $\eta_\CC$, for the update of $\mm^t$ and $\CC^t$ respectively,} are set to the same value $\eta$, while they \new{have}\del{are chosen to be} different values in practice ($\eta_\mm = 1$ and $\eta_\CC \leq 1$). The summation on the RHS in \eqref{eq:cma-igo-alg} converges to the RHS of \eqref{eq:cma-igo} with probability one as $\lambda \to \infty$ (Theorem~4 in \cite{Arnold2011arxiv}).

In the following, we study the simplified IGO-flow where the covariance matrix is parameterized by only a single \new{variance} parameter $\vv$ as $\CC = \vv I_{d}$\del{ where $\vv$ denotes the single variance of the Gaussian distribution, sometimes called the overall variance}. Under the parameterization $\theta = [\mm^\tp, \vv]^\tp$, \eqref{eq:igo} reduces to $\frac{\rmd \theta}{\rmd t} = \int \Wf(x) \bigl[\begin{smallmatrix} x - \mm \\ \norm{x - \mm}^{2}/d - \vv \end{smallmatrix}\bigr] \PP{\theta}(\dx)$. 
Using the change of variable $z = (x - \mm)/\sqrt{\vv}$, the above ODE reads
\begin{equation}\label{eq:esigo}
\ddt{\theta} = \gtt(\theta) 
\ ,\quad 
\gtt(\theta) = 
\int \Wf(\mm + \sqrt{\vv} z)
	\begin{bmatrix}
	\sqrt{\vv} z\\
	\vv (\norm{z}^{2}/d - 1 )
	\end{bmatrix}
P_{d}(\rmd z)
\end{equation}
and we rewrite it by part
\begin{align}
\textstyle \ddt{\mm} &= \gmm(\theta) 
  \ ,\quad 
\textstyle \gmm(\theta) = \sqrt{\vv} \int \Wf(\mm + \sqrt{\vv} z) z  \PPZ(d z) 
 \label{eq:esigo-m}\\
\textstyle \ddt{\vv} &= \gvv(\theta) 
  \ ,\quad 
\textstyle \gvv(\theta) = \vv \int \Wf(\mm + \sqrt{\vv} z) (\norm{z}^{2}/d  - 1 ) \PPZ(d z) \enspace. \label{eq:esigo-c} 
\end{align} 
The domain of this ODE is $\Theta = \{\theta = (\mm, \vv) \in \R^{d} \times \R_{+}\}$. We call \eqesigo the \textit{ES-IGO} ordinary differential equation. The following proposition shows that for a Lipschitz continuous \new{weight function $\ww$}, solutions of the ODE \eqref{eq:esigo} exist for any \new{initial condition $\theta(0) \in \Theta$} and are unique.
\begin{proposition}[Existence and Uniqueness]\label{prop:uni}
Suppose $\ww$ is Lipschitz continuous. Then the initial value problem: $\ddt{\theta} = \gtt(\theta)$, $\theta(0) = \theta_{0}$, has a unique solution on $[0, \infty)$ for each $\theta_{0} \in \Theta$, i.e.\ there is only one solution $\theta: \R_{\geq 0} \to \Theta$ to the initial value problem. 
\end{proposition}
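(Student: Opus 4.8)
The plan is to apply the Picard--Lindel\"of (Cauchy--Lipschitz) theorem. I would first show that the vector field $\gtt$ is locally Lipschitz continuous on the open domain $\Theta = \R^{d} \times \R_{+}$, which yields a unique maximal solution through each $\theta_{0} \in \Theta$; I would then derive a priori growth bounds on $\gtt$ to exclude finite-time escape, so that the maximal solution is in fact defined on all of $[0,\infty)$.

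The delicate step, and the main obstacle, is the local Lipschitz continuity, because $f$ is only assumed $\leb$-measurable, so $\Wf(x) = \ww(\PP{\theta}[y : f(y) \leq f(x)])$ can be badly discontinuous in $x$. The key is to keep the integrals in the original variable $x$ rather than passing to $z$: writing
\begin{equation*}
\gmm(\theta) = \int \ww\bigl(q_{\theta}(x)\bigr)\,(x - \mm)\,\pd{\theta}(x)\,\dx, \qquad q_{\theta}(x) := \PP{\theta}[y : f(y) \leq f(x)],
\end{equation*}
the dependence on $\theta$ enters only through the smooth Gaussian density $\pd{\theta}$ and through $q_{\theta}$, never by composing $f$ with a $\theta$-dependent argument (for fixed $x$ the threshold $f(x)$ is a constant). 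For $\theta$ in a compact subset $K \subset \Theta$ with $\vv \geq \vv_{\min} > 0$, I would differentiate $q_{\theta}(x) = \int_{\{f(y) \leq f(x)\}} \pd{\theta}(y)\,\rmd y$ under the integral sign; since $\partial_{\mm}\pd{\theta}(y) = \pd{\theta}(y)(y-\mm)/\vv$ and $\partial_{\vv}\pd{\theta}(y) = \pd{\theta}(y)\bigl(\norm{y-\mm}^{2}/(2\vv^{2}) - d/(2\vv)\bigr)$, the partial derivatives of $q_{\theta}$ are bounded uniformly in $x$ (for instance $\norm{\partial_{\mm} q_{\theta}(x)} \leq \E\norm{Z}/\sqrt{\vv}$). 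Combining this with the assumed Lipschitz continuity of $\ww$ (hence a bounded difference quotient) and the smoothness of $\pd{\theta}$ and of the factor $(x-\mm)$, the integrand is Lipschitz in $\theta$ for each fixed $x$, with a derivative bound of the form $C(K)\,\pd{\theta}(x)\,P(\norm{x-\mm})$ for a fixed polynomial $P$. A Gaussian tail estimate, uniform over $K$, dominates this by a fixed integrable function of $x$, so differentiation under the integral applies and $\gmm$ (and identically $\gvv$) is Lipschitz on $K$.

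For global existence I would use only the boundedness of $\ww$, say $\abs{\ww} \leq W_{\max}$. Returning to the $z$-form \eqref{eq:esigo-m}--\eqref{eq:esigo-c} and bounding $\abs{\Wf}$ by $W_{\max}$ gives $\norm{\gmm(\theta)} \leq W_{\max}\sqrt{\vv}\,\E\norm{Z} =: c_{1}\sqrt{\vv}$ and $\abs{\gvv(\theta)} \leq W_{\max}\,\vv\,\E\abs{\norm{Z}^{2}/d - 1} =: c_{2}\,\vv$, which also shows the integrals defining $\gtt$ are finite. Along any solution, $\abs{\ddt{\vv}} \leq c_{2}\vv$ forces $\vv(0)e^{-c_{2}t} \leq \vv(t) \leq \vv(0)e^{c_{2}t}$, so $\vv$ stays strictly positive and bounded on every finite interval; in particular the trajectory cannot reach the boundary $\{\vv = 0\}$ in finite time. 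Then $\norm{\ddt{\mm}} \leq c_{1}\sqrt{\vv(t)}$ keeps $\mm$ bounded on finite intervals. Hence on any finite time span the solution remains in a compact subset of $\Theta$, and the standard maximal-interval continuation argument (if the maximal time $T_{\max}$ were finite the solution would have to leave every compact subset of $\Theta$ as $t \to T_{\max}$) then gives $T_{\max} = \infty$, completing the proof.
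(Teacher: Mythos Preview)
Your proof is correct and follows essentially the same approach as the paper: establish local Lipschitz continuity of $\gtt$ from the Lipschitz continuity of $\ww$, derive a priori bounds on $\vv$ and $\mm$ from the boundedness of $\ww$ to confine the solution to compact subsets of $\Theta$ on finite time intervals, and conclude via Picard--Lindel\"of and the standard continuation argument. The paper merely asserts the local Lipschitz continuity of $\gtt$, whereas you actually justify it by observing that, in the $x$-variable form, the $\theta$-dependence enters only through the smooth quantities $\pd{\theta}$ and $q_{\theta}(x) = \int_{\{f \leq f(x)\}} \pd{\theta}(y)\,\rmd y$, so the assumed measurability of $f$ suffices.
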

\begin{proof}
We can obtain a lower bound $a(t) > 0$ and an upper bound $b(t) < \infty$ for $\vv(t)$ for each $t \geq 0$ under a bounded $\ww$. Similarly, we can have an upper bound $c(t) < \infty$ for $\norm{\mm(t)}$. Then we have that $(\mm(t), \vv(t)) \in E(t) = \{x \in \R^{d}: \norm{x} \leq c(t)\} \times \{x \in \R_{+}: a(t) \leq x \leq b(t)\}$ and $E(t)$ is compact for each $t \geq 0$. Meanwhile, $\gtt$ is locally Lipschitz continuous for a Lipschitz continuous $\ww$. Since $E(t)$ is compact, the restriction of $\gtt$ into $E(t)$ is Lipschitz continuous. Applying Theorem~3.2 in \cite{Khalil:2002wj} that is an extension of the theorem known as Picard-Lindel{\"o}f theorem or Cauchy-Lipschitz theorem, we have the existence and uniqueness of the solution on each bounded interval $[0, t]$. Since $t$ is arbitrary, we have the proposition.
\qed\end{proof}


Now that we know that solutions of the ES-IGO ODE exist and are unique, we define the ES-IGO-flow as the mapping $\flow: \R_{\geq 0} \times \Theta \to \Theta$, which maps $(t, \theta_{0})$ to the solution $\theta(t)$ of \eqref{eq:esigo} with initial condition $\theta(0) = \theta_{0}$. Note that we can extend the domain of $\gtt$ from $\Theta = \R^{d}\times \R_{+}$ to $\overline{\Theta} = \R^{d} \times \R_{\geq 0}$. It is easy to see from \eqref{eq:esigo} that the value of $\gtt(\theta)$ at $\theta = (\mm, 0)$ is $0$ for any $\mm \in \R^{d}$. However, we exclude the boundary $\partial \Theta$ from the domain for reasons that will become clear in the next section. Because the initial variance must be positive and the variance starting from positive region never reach the boundary in finite time, solutions $\flow(t,\cdot)$ will stay in the domain $\Theta$. However, as we will see, they can converge asymptotically towards points of the boundary. 

\del{Note that }Since $\Ft$ is {\em adaptive}, i.e.\ $J_{\theta_1}(\theta) \neq J_{\theta_2}(\theta)$ for $\theta_1 \neq \theta_2$ in general, it is not trivial to determine whether the solutions to \eqref{eq:igo} converge to points where $\gtt(\theta) = 0$\del{ or not}\footnote{If $\Ft$ is not adaptive and defined to be the expectation of the objective function $f(x)$ over $P_\theta$, convergence to the zeros of the RHS of \eqref{eq:igo} is easily obtained. For example, see Theorem~12 and its proof in \cite{Malago:2011hd}, where the solution to the system of a similar ODE whose RHS is the vanilla gradient of the expected objective function is derived and the convergence of the solution trajectory to the critical point of the expected function is proven.}. Even\del{ if we have chances to} know\new{ing} that they converge to zeros of $\gtt(\theta)$\del{, it} is not helpful at all, because \new{we have} $\gtt(\theta) = 0$ for any $\theta$ with variance zero and we are actually interested in convergence to the point $(\xstar, 0)$ where $\xstar$ is a local optimum of $f$.


\begin{remark}\label{rem:inv-param}
Because of the invariance property of the natural gradient, \del{under re-parameterization of the Gaussian distributions}the mean vector $\mm(\theta)$ and the variance $\vv(\theta)$ obey \eqref{eq:esigo-m} and \eqref{eq:esigo-c} \new{under re-parameterization of the Gaussian distributions}.
%
\del{
\begin{equation*}
\ddt{}\begin{bmatrix} \mm(\theta) \\ \vv(\theta) \end{bmatrix} %
= \int \Wf(\mm(\theta) + \sqrt{\vv(\theta)} z)
\begin{bmatrix}
\sqrt{\vv(\theta)} z\\
\vv(\theta) (\norm{z}^{2}/d - 1 )
\end{bmatrix}
\PPZ(\dz)\enspace.
\end{equation*}
}%
Therefore, the trajectories of $\mm$ and $\vv$ are also \new{independent of the}\del{ invariant under any} parameterization. For instance, we obtain the same trajectories $\vv(\theta)$ for any of the following parameterizations: $\theta_{d+1}=\vv$, $\theta_{d+1}=\sqrt{\vv}$, and $\theta_{d+1} = \frac12 \ln \vv$, although the trajectories of the parameters $\theta_{d+1}$ are of course different. Consequently, the same convergence results for $\mm(\theta)$ and $\vv(\theta)$ (see Section~\ref{sec:conv}) will hold under any parameterization. Parameterizations $\theta = (\mm, \vv)$ and $\theta = (\mm, \frac12 \ln \vv)$ correspond to the pure rank-$\mu$ update CMA-ES and the xNES with only one variance parameter. Thus, the continuous model to be analyzed encompasses both algorithms.
\end{remark}

\begin{remark}
Theory of stochastic approximation says that a stochastic algorithm $\theta^{t+1} = \theta^t + \eta h^t$ follows the solution trajectories of the ODE $\ddt{\theta} = \E[h^t \mid \theta^t = \theta]$ in the limit for $\eta$ to \nnew{zero}\del{$\infty$} under several conditions. In our setting, $\theta$ encodes $\mm$ and $\vv$ and the noisy observation $h^t = \sum_{i=1}^{\lambda} \ww_{R_i} \FIM^{-1} \nabla_\theta \ln p_{\theta^{t}}(x_i)$, where $\ww_{i}$, $i = 1, \dots, \lambda$, are predefined weights and $R_i$ is the ranking of $x_i$. If we define $\ww(p) = \sum_{i=1}^{\lambda} \ww_{i} \binom{\lambda-1}{i-1} p^{i-1}(1 - p)^{\lambda-i}$ in \eqref{eq:w}, then $\gtt(\theta) = \E[h^t \mid \theta^t = \theta]$ and the ODE agrees with \eqref{eq:esigo}. Therefore, \eqref{eq:esigo} can be viewed as the limit behavior of adaptive-ES algorithms not only in the case $\eta \to 0$ and $\lambda \to \infty$ but also in the case $\eta \to 0$ and \new{finite} $\lambda$\del{ remains finite}. Indeed, it is possible to bound the difference between $\{\theta^t, t \geq 0\}$ and the solution $\theta(\cdot)$ of the ODE \eqref{eq:esigo} by extending Lemma~1 in Chapter\nnew{~9} of \cite{Borkar:2008ts}.%
\del{Theory of stochastic approximation says that under several conditions a stochastic recursive algorithm $\theta^{t+1} = \theta^t + \eta h^t$ can be viewed as an noisy discretization of the associated ODE $\ddt{\theta} = H(\theta)$, where $H(\theta) = \E[h^t \mid \theta^t = \theta]$. In our setting, $\theta$ encodes $\mm$ and $\vv$ and the noisy observation $h^t = \sum_{i=1}^{\lambda} \ww_{R_i} \FIM^{-1} \nabla_\theta \ln p_{\theta^{t}}(x_i)$, \anne{$t$ or $\theta^{t}$ is missing in the RHS I guess} where $\ww_{i}$, $i = 1, \dots, \lambda$, are predefined weights and $R_i$ is the ranking of $x_i$ defined above. Then, $H(\theta)$ can be written in the same form as $\gtt(\theta)$ with weight function $\ww(p) = \sum_{i=1}^{\lambda} \ww_{i} \binom{\lambda-1}{i-1}  p^{i-1}(1 - p)^{\lambda-i}$. Hence, if the regularity conditions are satisfied we still have the ES-IGO ODE \eqref{eq:esigo} as the limit for $\eta$ to $\infty$. Indeed, it is possible to evaluate the difference between $\theta^t$ and $\theta(t \cdot \eta)$ in expectation by extending Lemma~1 in Chapter of \cite{Borkar:2008ts}.}%
 The details are omitted due to the space limitation.\footnote{When $H(\theta)$ is a (natural) gradient of a function, the stochastic algorithm is called a stochastic gradient method. The theory of stochastic gradient method (e.g., \cite{Bonnabel:2011to}) relates the convergence of the stochastic algorithm with the zeros of $H(\theta)$. However, it is not applicable to our algorithm due to the reason mentioned above Remark~\ref{rem:inv-param}.}
\end{remark}
\anne{an assumption is needed such that $\E[h^t \mid \theta^t = \theta]$ is a function of $\theta$. Maybe it is OK the way you wrote it since you start by ``Theory of stochastic approximation says that under several conditions'' but maybe it can be improved...}
\anne{Youhei, I see more or less what the 3 last sentences refer to but could you try to improve their formulation as it is not clear for the moment.}

\section{Extension of Lyapunov Stability Theorem}\label{sec:elst}

When convergence occurs, the variance typically converges to zero. Hence the study of the convergence of the solutions of the ODE will be carried out by analyzing the stability of the points $\stheta=(\xstar,0)$. However, because points with variance zero are excluded from the domain $\Theta$, we need to extend classical definitions of stability to be able to handle points located on the boundary of $\Theta$.

\begin{definition}[Stability]
Consider the following system of differential equation
\begin{equation}
\textstyle \dot \theta = \vf(\theta), \quad \theta(0) = \theta_0 \in D , \label{eq:auto}
\end{equation}%
where $\vf: D \mapsto \R^{\dd}$ is a continuous map and $D \subset \R^{\dd}$ is open. Then $\stheta \in \overline{D}$ is called 
\begin{itemize}
\item \textit{stable in the sense of Lyapunov}\footnote{%
Usually,\del{ the} stability is defined for stationary points. However, it is not the only case that a point is stable in our definition. Let $\stheta \in \overline{D}$ be a stable point. If $\stheta \in D$ or $\vf$ can be prolonged by continuity at $\stheta$ as $\lim_{\theta \to \stheta} \vf(\theta) = \vf(\stheta)$, then $\vf(\stheta) = 0$. That is, $\stheta$ is a stationary point. However, $\lim_{\theta \to \stheta} \vf(\theta)$ does not always exist for a stable boundary point $\stheta \in \partial D$. For example, consider the ODE: $\rmd \theta_1 / \rmd t = - \theta_1 / \sqrt{\theta_1^2 + \theta_2^2}$, $\rmd \theta_2 / \rmd t = - \theta_2$. The domain is $\R \times \R_+$. Then, $\abs{\theta_{1}}$ and $\theta_2$ are monotonically decreasing to zero. Hence, $(0,0)$ is globally asymptotically stable. However, $\lim_{\theta\to(0,0)} \vf(\theta)$ does not exist.} %
if for any $\epsilon > 0$ there is $\delta > 0$ such that $\theta_{0} \in D \cap \overline{B}(\stheta, \delta) \Longrightarrow \theta(t) \in D \cap \overline{B}(\stheta, \epsilon)$ for all $t \geq 0$, where $t \mapsto \theta(t)$ is any solution of \eqref{eq:auto};

\item \textit{locally attractive} if there is $\delta > 0$ such that $\theta_0 \in D \cap \overline{B}(\stheta, \delta) \Longrightarrow \lim_{t\to\infty} \lVert \theta(t) - \stheta \rVert = 0$ for any solution $t \mapsto \theta(t)$ of \eqref{eq:auto};

\item \textit{globally attractive} if $\lim_{t\to\infty} \norm{\theta(t) - \stheta} = 0$ for any $\theta_0 \in D$ and any solution $t \mapsto \theta(t)$ of \eqref{eq:auto};

\item \textit{locally asymptotically stable} if it is stable and locally attractive;

\item \textit{globally asymptotically stable} if it is stable and globally attractive.
\end{itemize}
\end{definition}

We can now understand why we need to exclude points with variance zero from the domain $\Theta$. Indeed, points with variance zero are points from where solutions of the ODE will never move because $\gtt( \theta ) = 0$\del{ for such points}. Consequently, if we include points $(x,0)$ in $\Theta$, none of these points can be attractive as in a neighborhood we always find $\theta_{0}=(x_{0},0)$ such that a solution starting in $\theta_{0}$ stays there and cannot thus converge to any other point. 
\yohe{The remark has been moved into a footnote.}

A standard technique to prove stability is\del{ to use the so-called} Lyapunov's method that consists in finding a scalar function $\VV: \R^\dd \to \R_{\geq 0}$ that is positive except for a candidate stable point 
$\stheta$ \new{with}\del{ to be studied for which} $\VV(\stheta) = 0$, and that is monotonically decreasing along any trajectory of the ODE. Such a function is called \textit{Lyapunov function} \new{(and is analogous to a potential function in dynamical systems)}. 
\del{The advantage of }Lyapunov's method\del{ is that it} does not require the analysis of the solutions of the ODE. The standard Lyapunov's stability theorem gives practical conditions to verify that a function is indeed a Lyapunov function. However, because our candidate stable points are located on $\partial \Theta$, we need to extend this standard \del{Lyapunov's }theorem.

\begin{lemma}[Extended Lyapunov Stability Method
]\label{lem:lyapunov-boundary}
Consider the autonomous system \eqref{eq:auto}, 
where $\vf: D \to \R^{\dd}$ is a map and $D \subset \R^{\dd}$ is the open domain of $\theta$. Let $\stheta \in \overline{D}$ be a candidate stable point. Suppose that there is an $R > 0$ such that \\
\asm{A1}: $\vf(\theta)$ is continuous on $D \cap B(\stheta, R)$;\\ 
\asm{A2}: there is a continuously differentiable $\VV: \R^{\new{\dd}} \to \R$ such that for some strictly increasing continuous function $\alpha: \R_+ \to \R_{+}$ satisfying $\lim_{p \to \infty} \alpha(p) = \infty$,
\begin{gather}
 V(\stheta) = 0, \quad V(\theta) \geq \alpha(\norm{\theta - \stheta}) \quad \forall \theta \in D \cap B(\stheta, R) \setminus\{\stheta\}\label{eq:lyap-p} \\
\text{ and } \qquad \nabla \VV(\theta)^\tp \vf(\theta) < 0 \quad \forall \theta \in D \cap B(\stheta, R) \setminus \{\stheta\} ; \label{eq:lyap-n}
\end{gather}
\asm{A3}: for any $r_1$ and $r_2$ such that $0 < r_1 \leq r_2 < R$, if a solution $\theta(\cdot)$ to \eqref{eq:auto} starting from $D_{r_1, r_2} = \{\theta \in D: r_{1} \leq \norm{\theta - \stheta} \leq r_{2}\}$ stays in $D_{r_1,r_2}$ for $t \in [0, \infty)$, then there is a $T \geq 0$ and a \emph{compact} set $E \subset D_{r_1,r_2}$ such that $\theta(t) \in E$ for $t \in [T, \infty)$. \\
\indent Then, $\stheta$ is locally asymptotically stable. If \asm{A1} and \asm{A2} hold with $D$ replacing $D \cap B(\stheta, R)$ and \asm{A3} holds \new{with $R = \infty$}\del{for any $r_1$ and $r_2$ such that $0 < r_1 \leq r_2$}, then $\stheta$ is globally asymptotically stable.
\end{lemma}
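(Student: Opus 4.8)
The plan is to adapt the classical Lyapunov stability proof, whose engine is that $\VV$ is nonincreasing along trajectories: since $\dot\VV = \nabla\VV(\theta)^\tp\vf(\theta) < 0$ by \eqref{eq:lyap-n}, the sublevel sets of $\VV$ are forward invariant and trap the flow. I would prove the two conclusions — stability and attractivity — separately and then combine them. Throughout, the lower bound $\VV(\theta)\geq\alpha(\norm{\theta-\stheta})$ in \eqref{eq:lyap-p} converts statements about $\VV$ into lower bounds on $\norm{\theta-\stheta}$, while continuity of $\VV$ at $\stheta$ together with $\VV(\stheta)=0$ converts smallness of $\norm{\theta-\stheta}$ into smallness of $\VV$.

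First I would prove stability. Fix $\epsilon\in(0,R)$. By \eqref{eq:lyap-p}, every $\theta\in D\cap\partial B(\stheta,\epsilon)$ satisfies $\VV(\theta)\geq\alpha(\epsilon)>0$, and by continuity of $\VV$ at $\stheta$ there is $\delta\in(0,\epsilon]$ with $\VV<\alpha(\epsilon)$ on $\overline B(\stheta,\delta)$. Take $\theta_0\in D\cap\overline B(\stheta,\delta)$ and suppose the solution leaves $\overline B(\stheta,\epsilon)$; let $t^*$ be the first exit time. On $[0,t^*]$ the trajectory lies in $D\cap B(\stheta,R)\setminus\{\stheta\}$, so $\dot\VV<0$ along it and $\VV(\theta(t^*))\leq\VV(\theta_0)<\alpha(\epsilon)$; but $\theta(t^*)\in D$ with $\norm{\theta(t^*)-\stheta}=\epsilon$ gives $\VV(\theta(t^*))\geq\alpha(\epsilon)$, a contradiction. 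Hence the solution stays in $D\cap\overline B(\stheta,\epsilon)$ for all $t\geq0$, which is Lyapunov stability.

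Next I would prove local attractivity, and this is where \asm{A3} is essential. Fix $\epsilon_0<R$ with its stability radius $\delta_0$, and take $\theta_0\in D\cap\overline B(\stheta,\delta_0)$, so the trajectory stays in $D\cap\overline B(\stheta,\epsilon_0)$. Since $\VV(\theta(t))$ is nonincreasing and bounded below by $0$, it converges to some $c\geq0$. I would show $c=0$ by contradiction: if $c>0$, continuity of $\VV$ at $\stheta$ yields $r_1>0$ with $\VV<c$ on $\overline B(\stheta,r_1)$, so $\VV(\theta(t))\geq c$ forces $\norm{\theta(t)-\stheta}>r_1$ for all $t$, placing the whole trajectory in the annulus $D_{r_1,\epsilon_0}$. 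Then \asm{A3} supplies $T\geq0$ and a \emph{compact} $E\subset D_{r_1,\epsilon_0}$ with $\theta(t)\in E$ for $t\geq T$; on $E$ the map $\dot\VV=\nabla\VV^\tp\vf$ is continuous (by \asm{A1} and $\VV\in\mathcal{C}^1$) and strictly negative, hence $\max_E\dot\VV=-\gamma<0$, and integrating gives $\VV(\theta(t))\leq\VV(\theta(T))-\gamma(t-T)\to-\infty$, contradicting $\VV\geq0$. Thus $c=0$; and since a subsequence with $\norm{\theta(t_n)-\stheta}\geq\rho>0$ would force $\VV(\theta(t_n))\geq\alpha(\rho)>0$, we conclude $\theta(t)\to\stheta$.

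The main obstacle — and the reason the standard theorem must be extended — is precisely this compactness step. When $\stheta$ is interior, the closed annulus $\{r_1\leq\norm{\theta-\stheta}\leq r_2\}$ is a compact subset of the domain, so $\dot\VV$ is automatically bounded away from $0$ there; but when $\stheta\in\partial D$ the annulus meets $\partial D$, is not contained in the open domain $D$, and $\vf$ may fail to extend continuously to it, so the classical argument collapses. Assumption \asm{A3} is the minimal replacement, guaranteeing that a trajectory confined to the annulus is eventually trapped in a genuinely compact subset of $D$ and restoring uniform negativity of $\dot\VV$. Finally, for the global statement I would repeat the attractivity argument with $\epsilon_0$ obtained from coercivity: for arbitrary $\theta_0\in D$, monotonicity gives $\alpha(\norm{\theta(t)-\stheta})\leq\VV(\theta(t))\leq\VV(\theta_0)$, and $\lim_{p\to\infty}\alpha(p)=\infty$ bounds the trajectory inside some $\overline B(\stheta,\epsilon_0)$; the annulus/\asm{A3} argument (now valid for all $0<r_1\leq r_2$) then yields $c=0$ and $\theta(t)\to\stheta$ from every initial point, i.e. global asymptotic stability.
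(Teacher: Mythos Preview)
Your proof is correct and follows essentially the same approach as the paper's: establish stability and monotone convergence of $\VV(\theta(t))$ to some $c\geq 0$, then argue by contradiction that $c=0$ by trapping the trajectory in an annulus, invoking \asm{A3} to get a compact set $E\subset D$, and using the uniform negative bound $\max_E \nabla\VV^\tp\vf = -\gamma < 0$ to drive $\VV$ to $-\infty$. The paper's version is terser, deferring the stability step and the existence of the limit to Khalil's Theorem~4.1, but the contradiction mechanism via \asm{A3} is identical to yours.
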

\begin{proof}
We follow \del{the line of }the proof of Theorem~4.1 in \cite{Khalil:2002wj}. \del{Just as in the proof of Theorem~4.1 in \cite{Khalil:2002wj}, we}\new{We} have from assumptions \asm{A1} and \asm{A2} that there is $\delta < R$ such that $\stheta$ is stable and $\VV(\theta(t)) \to \tVV \geq 0$ for each $\theta_{0} \in D \cap B(\stheta, \delta)$. Moreover, under \asm{A1} and \asm{A2} with $D$ replacing $D \cap B(\stheta, R)$ we have that $\VV(\theta(t)) \to \tVV \geq 0$ for each $\theta_{0} \in D$. Since $\lim_{t \to \infty} \VV(\theta(t)) \to 0$ implies $\lim_{t \to \infty} \norm{\theta - \stheta} = 0$ by \eqref{eq:lyap-p}, it is enough to show $\tVV = 0$. We show $\tVV = 0$ by contradiction argument. Assume that $\tVV > 0$. Then, \del{as is proved in the proof of Theorem~4.1 in \cite{Khalil:2002wj} }we have that for each $\theta_{0} \in D$ (or $\in D \cap B(\stheta, \delta)$ for the case of local asymptotic stability) there are $r_{1}$ and $r_{2}$ such that $0 < r_{1}\leq r_{2}$ ($\leq \delta$) and $\theta(t)$ lies in $D_{r_{1}, r_{2}}$ for $t \geq 0$. Note that $D_{r_1,r_2}$ is not necessarily a compact set. This is different from Theorem~4.1 in \cite{Khalil:2002wj}. By assumption \asm{A3} we have that there is a compact set $E$ and $T \geq 0$ such that $\theta(t) \in E$ for $t \geq T$. Since $\VV$ is continuously differentiable and $\vf$ is continuous, $\nabla \VV(\theta)^\tp \vf(\theta)$ is continuous. Then, the function $\theta \mapsto \VV(\theta)^\tp \vf(\theta)$ has its maximum $-\beta$ on the compact $E$ and $- \beta < 0$ by \eqref{eq:lyap-n}. This leads to $\VV(\theta(t)) \leq \VV(\theta(T)) - \beta (t - T) \downarrow - \infty$ as $t \to \infty$. This contradicts the hypothesis that $\VV > 0$. Hence, $\tVV = 0$ for any $\theta_{0} \in D$ (or $\in D \cap B(\stheta, \delta)$). 
\qed\end{proof}
\section{Convergence of the ES-IGO-flow}\label{sec:conv}



In this section we study the convergence properties of the ES-IGO-flow $\flow: (t, \theta_{0}) \mapsto \theta(t)$, where $\theta(\cdot)$ represents the solution to the ES-IGO ODE \eqref{eq:esigo} with initial value $\theta(0) = \theta_{0}$, i.e., $\ddt{\flow(t, \theta_{0})} = \gtt(\flow(t, \theta_{0}))$ and $\flow(0, \theta_{0}) = \theta_{0}$. By the definition of asymptotic stability, the global asymptotic stability of $\stheta \in \overline{\Theta}$ implies the global convergence, that is, $\lim_{t \to \infty} \flow(t, \theta_0) = \stheta$ for all $\theta_0 \in \Theta$. Moreover, the local asymptotic stability of $\stheta \in \overline{\Theta}$ implies the local convergence, that is, $\exists \delta > 0$ such that $\lim_{t \to \infty} \flow(t, \theta_0) = \stheta$ for all $\theta_0 \in \Theta \cap B(\stheta, \delta)$.
We will prove convergence properties of the ES-IGO-flow by applying Lemma~\ref{lem:lyapunov-boundary}. In order to prove our result we need to make the following assumption on $\ww$:\\
\asm{B1}: $\ww$ is non-increasing and Lipschitz continuous with $\ww(0) > \ww(1)$; \\
\asm{B2}: $\int \ww(\PP{1}[y: y \leq z]) (z^{2}/d-1/d) \PP{1}(\dz) = \alpha > 0$.

Assumption \asm{B1} is not restrictive. Indeed, the non-increasing and non-constant property of $\ww(\cdot)$ is a natural requirement and any weight setting in \eqref{eq:cma-igo-alg} can be expressed, for any given population size $n$, as a discretization of some Lipschitz continuous weight function. Assumption \asm{B2} is satisfied if and only if the variance $\vv$ diverges exponentially on a linear function. In fact, $\gvv(\theta)$ defined in \eqref{eq:esigo-c} reduces to $\vv \int \ww(\PP{1}[y: y \leq z]) (z^{2}/d-1/d) \PP{1}(\dz)$ when $f(x) = a^\tp x$ for $\forall a \in \R^d \setminus \{0\}$ and we have that $\dot \vv = \alpha \vv$ and the solution is $\vv(t) = \vv_{0} \exp(\alpha t)$. Then, $\vv(t) \to \infty$ as $t \to \infty$. Assumption \asm{B2} holds, for example, if $\ww$ is convex and not linear.

Let $\sif$ be the set of strictly increasing functions $g: \R \to \R$ that are $\leb$-measurable and $\Ctwo$ be the set of twice continuously differentiable functions $h: \R^{d} \to \R$ that are $\leb$-measurable. Under \asm{B1} and \asm{B2}, we have the following main theorems.

\begin{theorem}
\label{thm:quad}
Suppose that the objective function $f$ is a monotonic convex-quadratic-composite function $g \circ h$, where $g \in \sif$ and $h$ is a convex quadratic function $x \mapsto (x - \xstar)^\tp A (x - \xstar) / 2$ where $A$ is positive definite and symmetric. Assume that \asm{B1} and \asm{B2} hold. Then, $\stheta = (\xstar, 0) \in \overline{\Theta}$ is the globally asymptotically stable point of the ES-IGO. Hence, we have the global convergence of $\flow(t, \theta_0)$ to $\stheta$.
\end{theorem}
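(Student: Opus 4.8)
The plan is to apply the Extended Lyapunov Stability Method (Lemma~\ref{lem:lyapunov-boundary}) at the boundary point $\stheta = (\xstar, 0) \in \partial\Theta$, with domain $D = \Theta$ and field $\gtt$. First I would reduce the problem. Since $g \in \sif$ is strictly increasing, the event $\{y : f(y) \leq f(x)\}$ equals $\{y : h(y) \leq h(x)\}$, so the quantile, and hence $\Wf$, $\gmm$, $\gvv$, are unchanged when $f = g\circ h$ is replaced by $h$; I may therefore assume $f = h$. A translation sets $\xstar = 0$, giving $h(x) = \tfrac12 x^\tp A x$ with $\nabla h(x) = Ax$. Because the isotropic dynamics \eqref{eq:esigo-m}--\eqref{eq:esigo-c} are invariant under orthogonal changes of coordinates (they preserve $\PPZ$ and the Euclidean norm), I may further assume $A = \diag(a_1,\dots,a_d)$ with $a_i > 0$.

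The candidate Lyapunov function is $\VV(\mm,\vv) = \norm{\mm}^2 + \vv$. Assumption \asm{A1} (continuity of $\gtt$ near $\stheta$) follows from Lipschitz continuity of $\ww$ exactly as in Proposition~\ref{prop:uni}. For the coercivity part of \asm{A2} I would minimize $\VV$ over the sphere $\norm{\theta - \stheta} = p$, i.e.\ over $\norm{\mm}^2 + \vv^2 = p^2$ with $\vv \geq 0$: the minimum is attained at an endpoint and equals $\min(p, p^2)$, so $\VV(\theta) \geq \alpha(\norm{\theta-\stheta})$ with $\alpha(p) = \min(p, p^2)$, strictly increasing and unbounded.

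The core is the strict decrease condition $\nabla\VV^\tp\gtt = 2\mm^\tp\gmm(\theta) + \gvv(\theta) < 0$ for every $\theta \in \Theta\setminus\{\stheta\}$. Here I would exploit the scaling symmetry: since the quantile is scale invariant, one checks $\gmm(\lambda\mm, \lambda^2\vv) = \lambda\,\gmm(\mm,\vv)$ and $\gvv(\lambda\mm, \lambda^2\vv) = \lambda^2\,\gvv(\mm,\vv)$ for $\lambda > 0$, so $\nabla\VV^\tp\gtt$ is homogeneous of degree $2$ along the rays $(\mm,\vv)\mapsto(\lambda\mm,\lambda^2\vv)$. Its sign is thus constant on each ray, and it suffices to prove $2\mm^\tp\gmm(\mm,1) + \gvv(\mm,1) < 0$ for all $\mm\in\R^d$ on the slice $\vv = 1$. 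To handle the integrals I would write $\Wf(x) = u(h(x))$ with $u = \ww\circ\Psi$ non-increasing, $\Psi$ the c.d.f.\ of $h$ under $\PP{\theta}$, and apply Gaussian integration by parts (Stein's identity) to turn $\gmm$ into an expectation involving $u'(h)\,Ax$ with $u'\leq 0$, and $\gvv$ into a covariance of $\Wf$ with $\norm{x-\mm}^2$. The two limiting regimes guide the estimate: at $\mm = 0$ the mean term vanishes and $\gvv(0,1) < 0$ because $\Wf$ is a non-increasing function of $h$, hence negatively correlated with $\norm{x}^2$, so the variance strictly shrinks at the optimum; as $\norm{\mm}\to\infty$ the sampled region sees an essentially linear function, $2\mm^\tp\gmm$ behaves like $-c\norm{\mm}$ and dominates the bounded $\gvv$.

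I expect this anisotropic sign estimate to be the main obstacle. For $A = I$, rotational symmetry collapses the slice inequality to a one-dimensional statement in $\norm{\mm}$, but for general $\diag(a_i)$ the weighted correlation between $\Wf = u(h)$ and $\norm{x-\mm}^2$ must be controlled across all directions of $\mm$ and all eigenvalue ratios, with only the monotonicity of $\ww$ and \asm{B1}--\asm{B2} available as structural inputs. Finally, for \asm{A3} I would note that an annulus $D_{r_1,r_2}$ fails to be compact only through the face $\vv = 0$; but once $\vv$ is small inside the annulus, $\norm{\mm}$ is forced to stay bounded away from $0$, and the linearization of $h$ at $\mm$ together with \asm{B2} gives $\gvv(\theta) > 0$, so $\vv$ cannot approach $0$. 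This yields a uniform lower bound $\vv(t)\geq a > 0$ and hence a compact $E = D_{r_1,r_2}\cap\{\vv\geq a\}$. Invoking the global form of Lemma~\ref{lem:lyapunov-boundary} then gives global asymptotic stability of $\stheta$, i.e.\ the claimed global convergence of $\flow(t,\theta_0)$.
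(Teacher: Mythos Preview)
Your overall plan—invoke Lemma~\ref{lem:lyapunov-boundary}, reduce $f$ to $h$ by monotone invariance, and verify \asm{A1}--\asm{A3}—is exactly what the paper does, and your treatment of \asm{A1} and \asm{A3} matches the paper's. The scaling reduction to the slice $\vv=1$ is a correct and pleasant observation, though the paper does not use it.

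The genuine gap is in \asm{A2}, and you name it yourself: you do not actually establish the strict-decrease inequality for general positive definite $A$. Stein's identity together with the two limiting regimes $\mm=0$ and $\norm{\mm}\to\infty$ is suggestive, but nothing in your sketch controls the sign of $2\mm^\tp\gmm+\gvv$ for intermediate $\mm$ and anisotropic $A$. Even your ``easy'' endpoint $\mm=0$ already hides the real issue: there you need that a non-increasing function of $x^\tp A x$ is negatively correlated with $\norm{x}^2$, which is a correlation statement between two \emph{different} quadratic forms, not a tautology.

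The paper closes this gap with a single correlation inequality, and for this the precise Lyapunov function matters. With $\VV(\theta)=\norm{\mm-\xstar}^2+d\cdot\vv$ (note the factor $d$, which your $\VV$ omits), one computes
\[
\nabla\VV(\theta)^\tp F_\theta(\theta,z)
= 2\sqrt{\vv}\,(\mm-\xstar)^\tp z + \vv\norm{z}^2 - d\vv
= \norm{x-\xstar}^2 - \norm{\mm-\xstar}^2 - d\vv,
\]
so up to a $z$-independent constant the integrand is $\norm{x-\xstar}^2=\sum_i(x_i-\xstar_i)^2$. On the other side, $\Wf(x)$ is non-increasing in $h(x)=\tfrac12\sum_i a_i(x_i-\xstar_i)^2$. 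Under $x\sim\Gauss(\mm,\vv I_d)$ the variables $Y_i=(x_i-\xstar_i)^2$ are independent, $\Wf(x)$ is non-increasing in each $Y_i$, and $\norm{x-\xstar}^2$ is non-decreasing in each $Y_i$; a Chebyshev/FKG-type correlation inequality on product measures (the ``extension of the result in \cite{Thorisson:2000uo}'' the paper invokes) then yields strict negative correlation, hence
\[
\nabla\VV(\theta)^\tp\gtt(\theta)
< \Bigl(\int\Wf(\mm+\sqrt{\vv}z)\,P_d(\rmd z)\Bigr)\Bigl(\int\nabla\VV(\theta)^\tp F_\theta(\theta,z)\,P_d(\rmd z)\Bigr)=0
\]
for every $\theta\in\Theta$, in one stroke and uniformly in $A$ and $\mm$. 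With your $\VV=\norm{\mm}^2+\vv$ the inner product becomes $2\sqrt{\vv}\,\mm^\tp z + \vv\norm{z}^2/d - \vv$, which is no longer a monotone function of the $Y_i$, and the argument above does not apply; the factor $d$ is not cosmetic.
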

\begin{proof}
Since the ES-IGO does not explicitly utilize the function values but uses the quantile $P_{\theta}[y: f(y) \leq f(x)]$ which is equivalent to $P_{\theta}[y: g^{-1} \circ f(y) \leq g^{-1} \circ f(x)]$, without loss of generality we assume $f = h$.

According to Lemma~\ref{lem:lyapunov-boundary}, it is enough to show that \asm{A1} and \asm{A2} hold with $D (= \Theta)$ replacing $D \cap B(\stheta, R)$ and \asm{A3} holds \new{with $R = \infty$}\del{for any $r_1$ and $r_2$ such that $0 < r_1 \leq r_2$}. As is mentioned in the proof of Proposition~\ref{prop:uni}, $\gtt$ is locally Lipschitz continuous for a Lipschitz continuous $\ww$. Thus, \asm{A1} is satisfied under \asm{B1}.

 We can choose as a Lyapunov candidate function $\VV(\theta) = \sum_{i=1}^{d} (\mm_{i} - \xstar_{i})^{2} + d \cdot \vv = \norm{\mm - \xstar}^2 + \Tr(\vv I_d)$. All the conditions on $\VV$ described in \asm{A2} are obvious except for the negativeness of $\nabla \VV(\theta)^\tp \gtt(\theta)$. To show the negativeness, rewrite $\gtt(\theta)$ as $\int \Wf(\mm + \sqrt{\vv} z)\gttz P_d(\dz)$. The idea is to show the (strictly) negative correlation between $\Wf(\mm + \sqrt{\vv} z)$ and $\nabla \VV(\theta)^\tp \gttz$ by using an extension of the result in \cite[Chapter~1]{Thorisson:2000uo} and apply the inequality $\int \Wf(\mm + \sqrt{\vv} z) \nabla \VV(\theta)^\tp \gttz P_d(\dz) < \int \Wf(\mm + \sqrt{\vv} z)P_d(\dz) \int \nabla \VV(\theta)^\tp \gttz P_d(\dz)$ = 0. We use the non-increasing property of $\ww$ with $\ww(0) > \ww(1)$ in \asm{B1} to show the negative correlation.

To prove \asm{A3}, we require \asm{B2}. Since a continuously differentiable function can be approximated by a linear function at any non-critical point $\xbar$, the natural gradient $\gtt$ is approximated by that on a linear function in a small neighborhood of (\xbar, 0). We use the property $\leb[x: f(x) = \bar f] = 0$ to approximate $\gtt$. As is mentioned above, \asm{B2} implies $\gvv$ on a linear function is positive. By using the approximation and this property, we can show that $E = D_{r_1,r_2} \cap \{\theta: \vv \geq \bar \vv \}$ satisfies \asm{A3} for some $\bar \vv > 0$.
\qed\end{proof}
We have that for any initial condition $\theta(0)=(\mm_{0}, \vv_{0})$, the search distribution $\PP{\theta}$ weakly converges to the Dirac measure $\delta_{\xstar}$ concentrated at the global minimum point $\xstar$. This result is generalized to monotonic $\Ctwo$-composite functions using a quadratic Taylor approximation. However, global convergence becomes local convergence.

\begin{theorem}
\label{thm:general}
Suppose that the objective function $f$ is a monotonic $\Ctwo$-composite function $g \circ h$, where $g \in \sif$ and $h \in \Ctwo$ has the property that $\leb[x: h(x) = s] = 0$ for any $s \in \R$. Assume that \asm{B1} and \asm{B2} hold. Let $\xstar$ be a critical point of $h$, i.e.\ $\nabla h(\xstar) = 0$, with a positive definite Hessian matrix $A$. Then, $\stheta = (\xstar, 0) \in \overline{\Theta}$ is a locally asymptotically stable point of the ES-IGO. Hence, we have the local convergence of $\flow(t, \theta_0)$ to $\stheta$. Moreover, if $\xbar$ is not a critical point of $h(\cdot)$, for any $\theta_{0} \in \Theta$, $\flow(t,\theta_{0})$ will never converge to $\btheta = (\xbar, 0)$.
\end{theorem}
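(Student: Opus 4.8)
The plan for the local statement is to invoke the extended Lyapunov method (Lemma~\ref{lem:lyapunov-boundary}) with exactly the candidate used in Theorem~\ref{thm:quad}, namely $\VV(\theta) = \norm{\mm - \xstar}^2 + d\,\vv$, but now verified only on a small ball $\Theta \cap B(\stheta, R)$. As in Theorem~\ref{thm:quad}, since the ES-IGO sees $f$ only through the $\PP{\theta}$-quantile, I first reduce to $f = h$ by quantile invariance and subtract the constant $h(\xstar)$ (which leaves all rankings unchanged). Because $\xstar$ is a critical point with positive definite Hessian $A$, Taylor's theorem gives $h(x) = \tfrac12 (x - \xstar)^\tp A (x - \xstar) + \rho(x)$ with $\rho(x) = o(\norm{x - \xstar}^2)$, i.e.\ on a small neighborhood $h$ is a uniformly small $\Ctwo$-perturbation of the convex quadratic $h_0(x) = \tfrac12 (x - \xstar)^\tp A (x - \xstar)$ that already falls under Theorem~\ref{thm:quad}. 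Assumption \asm{B1} again yields local Lipschitz continuity of $\gtt$, so \asm{A1} holds, and the two positivity requirements on $\VV$ in \asm{A2} are immediate for this quadratic $\VV$.

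The crux is the sign condition $\nabla\VV(\theta)^\tp \gtt(\theta) < 0$ on $\Theta \cap B(\stheta, R)\setminus\{\stheta\}$ together with \asm{A3}, both of which I would obtain by perturbing the quadratic analysis. For $h_0$, Theorem~\ref{thm:quad} establishes, via the coupling argument of \cite[Chapter~1]{Thorisson:2000uo} and the strict inequality $\ww(0) > \ww(1)$ from \asm{B1}, the strict negative correlation $\int W_{\theta}^{h_0}(\mm + \sqrt{\vv} z)\,\nabla\VV(\theta)^\tp \gttz\, \PPZ(\dz) < 0$, the weight entering only through the ranking of sampled points. I would show this strict inequality survives the perturbation: for $x = \mm + \sqrt{\vv} z$ with $(\mm,\vv)$ in a small ball, the $h$-ordering of samples agrees with the $h_0$-ordering off a set whose $\PPZ$-measure tends to $0$ as $R \to 0$, since the relative Taylor error $\rho(x)/h_0(x) = o(1)$ near $\xstar$ and the hypothesis $\leb[x: h(x) = s] = 0$ excludes ties. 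By the scale-invariance of the quadratic problem the quadratic correlation is not merely negative but bounded away from zero after normalizing by the overall scale (note $\gmm \sim \sqrt{\vv}$ and $\gvv \sim \vv$ both vanish at $\stheta$, so absolute smallness of the error would not suffice), whence the sign is preserved for $R$ small enough. For \asm{A3} I would mirror Theorem~\ref{thm:quad} and take $E = D_{r_1,r_2} \cap \{\theta : \vv \geq \bar\vv\}$: staying in the annulus $D_{r_1,r_2}$ keeps $\norm{\mm - \xstar}$ bounded away from $0$, so (for $R$ small, where $\xstar$ is the unique critical point) the trajectory sits over non-critical $\mm$, and \asm{B2} forces $\gvv > 0$ there, preventing $\vv$ from approaching $0$ and thus producing the required compact absorbing set. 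I expect this perturbation step---controlling the correlation inequality uniformly under the degenerate $\vv\to0$ scaling while the quadratic approximation error is made negligible---to be the main obstacle.

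For the final assertion, suppose toward a contradiction that $\flow(t, \theta_0) \to \btheta = (\xbar, 0)$ with $\nabla h(\xbar) \neq 0$, so that $\mm(t) \to \xbar$ and $\vv(t) \to 0$. Near the non-critical point $\xbar$, $h$ is approximated by the affine map $x \mapsto h(\xbar) + \nabla h(\xbar)^\tp (x - \xbar)$ with nonzero gradient, for which \asm{B2} gives $\int \ww(\PP{1}[y: y \leq z])(z^2/d - 1/d)\,\PP{1}(\dz) = \alpha > 0$. The plan is to prove $\gvv(\theta)/\vv \to \alpha$ as $\theta \to \btheta$: as $\vv \to 0$ the samples $\mm + \sqrt{\vv} z$ concentrate at $\xbar$, where the ranking of $h$ coincides with that of its nonconstant affine part (again discarding ties via $\leb[x: h(x) = s] = 0$), so $\int \Wf(\mm + \sqrt{\vv} z)(\norm{z}^2/d - 1)\,\PPZ(\dz) \to \alpha$. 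Hence there is a neighborhood of $\btheta$ on which $\dot\vv = \gvv(\theta) \geq \tfrac{\alpha}{2}\,\vv > 0$; once the trajectory enters and remains in this neighborhood (which it must, by assumed convergence), $\vv$ grows at least exponentially by a Gr\"onwall bound and therefore cannot tend to $0$, the desired contradiction.
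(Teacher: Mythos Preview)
Your proposal is correct and follows essentially the same route as the paper: the same Lyapunov function $\VV(\theta)=\norm{\mm-\xstar}^2+d\,\vv$, \asm{A2} via Taylor approximation of $h$ by its quadratic part near $\xstar$ and transfer of the strict negativity from Theorem~\ref{thm:quad}, \asm{A1} and \asm{A3} carried over using only the level-set hypothesis $\leb[x:h(x)=s]=0$, and the final non-convergence claim via the linear approximation at a non-critical point together with \asm{B2}. One small imprecision in your \asm{A3} sketch: the annulus $D_{r_1,r_2}$ does not by itself bound $\norm{\mm-\xstar}$ away from zero (take $\mm=\xstar$, $\vv\geq r_1$); what you need---and what your argument in fact uses---is that \emph{once $\vv$ is small} the annulus constraint forces $\norm{\mm-\xstar}$ to be bounded below, placing $\mm$ at a non-critical point where the linear approximation and \asm{B2} give $\gvv>0$.
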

\begin{proof}
\del{The proof is similar to the proof of Theorem~\ref{thm:quad}. }As in the proof of Theorem~\ref{thm:quad}, we assume $f = h$ without loss of generality. The proofs of \asm{A1} and \asm{A3} carry over from Theorem~\ref{thm:quad} because we only use\new{d} the property $\leb[x: f(x) = \bar f] = 0$. To show \asm{A2}, we use the Taylor approximation of the objective function $f$. Since \del{the objective function}\new{f} is approximated by a quadratic function in a neighborhood of a critical point $\xstar$, we approximate the natural gradient by the corresponding natural gradient on the quadratic function. Then, employing the same Lyapunov candidate function as in the previous theorem we can show \asm{A2}. Because of the approximation, we only have local asymptotic stability. The last statement of Theorem~\ref{thm:general} is an immediate consequence of the approximation of the natural gradient and \asm{B2}.
\qed\end{proof}
We have that starting from a point close enough to a local minimum point $\xstar$ with a sufficiently small initial variance, the search distribution weakly converges to $\delta_{\xstar}$. It is not guaranteed for the parameter to converge somewhere when the initial mean is not close enough to the local optimum or the initial variance is not small enough. Theorem~\ref{thm:general} also states that the convergence $(\mm(t), \vv(t)) \to (\xbar, 0)$ does not happen for $\xbar$ such that $\nabla h(\xbar) \neq 0$. That is, the continuous time ES-IGO does not prematurely converge on a slope of the landscape of $f$.

\section{Conclusion}\label{sec:disc}

In this paper we have proven the local convergence of the continuous time model associated to step-size adaptive ESs towards local minima on monotonic $\Ctwo$-composite functions. In the case of monotonic convex-quadratic-composite functions we have proven the global convergence, i.e. convergence independently of the initial condition (provid\new{ed the initial} step-size is strictly positive) towards the unique minimum. Our analysis relies on investigating the stability of critical points associated to the underlying ODE that follows from the Information Geometric Optimization setting. We use a classical method for the analysis of stability of critical points, based on Lyapunov functions. We have however extended the method to be able to handle convergence towards solutions at the boundary of the ODE definition domain. We believe that our approach is general enough to handle more difficult cases like the CMA-ES with a more general covariance matrix. We want to emphasize that the model we have analyzed is the correct model for step-size \emph{adaptive} ESs as the ODE encodes both the mean vector \emph{and} step-size and preserves fundamental invariance properties of the algorithm.

\section*{Acknowledgments}
{\small
This work was partially supported by the ANR-2010-COSI-002
grant (SIMINOLE) of the French National Research Agency and the ANR COSINUS project ANR-08-COSI-007-12.
}
%
%
\small

\end{document}